\newtheorem{theorem}{Theorem}
\DeclareMathOperator*{\minimize}{minimize}
\title{Strategy Discovery and Mixture in Lifelong Learning from Heterogeneous Demonstration}
\author{
    Sravan Jayanthi\equalcontrib{\renewcommand\footnotemark{}\thanks{Accepted at the AAAI-22 Workshop on Interactive Machine Learning (IML@AAAI'22)}},
    Letian Chen\equalcontrib,
    Matthew Gombolay
}
\begin{document}
\maketitle

\begin{abstract}
Learning from Demonstration (LfD) approaches empower end-users to teach robots novel tasks via demonstrations of the desired behaviors, democratizing access to robotics. A key challenge in LfD research is that users tend to provide heterogeneous demonstrations for the same task due to various strategies and preferences. Therefore, it is essential to develop LfD algorithms that ensure \textit{flexibility} (the robot adapts to personalized strategies), \textit{efficiency} (the robot achieves sample-efficient adaptation), and \textit{scalability} (robot reuses a concise set of strategies to represent a large amount of behaviors). In this paper, we propose a novel algorithm, Dynamic Multi-Strategy Reward Distillation (DMSRD), which distills common knowledge between heterogeneous demonstrations, leverages learned strategies to construct mixture policies, and continues to improve by learning from all available data. Our personalized, federated, and lifelong LfD architecture surpasses benchmarks in two continuous control problems with an average 77\% improvement in policy returns and 42\% improvement in log likelihood, alongside stronger task reward correlation and more precise strategy rewards. 
\end{abstract}

\section{Introduction}
\quad Robotics has made significant progress owing to recent advancements of Deep Reinforcement Learning (DRL) techniques on training complex control tasks~\cite{DBLP:journals/corr/abs-1812-05905,seraj2021heterogeneous,konan2022iterated}. However, RL's success heavily relies on sophisticated reward functions designed for each task, which requires the input and expertise of RL researchers to be effective~\cite{10.1007/11840817_87, seraj2021hierarchical}, limiting the scalability and ubiquity of robots. Even RL researchers may fail to engineer a proper reward function that generates behaviors matching their latent expectations~\cite{Amodei2016ConcretePI}, and a more time-consuming trial-and-error process has to be employed to discover a suitable reward function. 


Instead, Learning from Demonstration (LfD) approaches seek to democratize access to robotics by having users demonstrate the desired task to the robot~\cite{NIPS1996_68d13cf2,gombolay2018robotic,palan2019learning}, making robot learning more readily accessible to non-roboticist end-users. As LfD research strives to empower end-users with the ability to program novel behaviors for robots, we must consider that end-users adopt varying preferences and strategies in how they complete a task. For example, when teaching a table tennis robot, one user might show a topspin strike while another a lob shot, as illustrated in Figure~\ref{fig:DMSRD_illustration}. A third user could demonstrate a topspin-lob strike, seeking to add both spin and height to the shot. This heterogeneity in demonstrations could fail the machine learning algorithm~\cite{morales2004learning} and the robot could approach the task in such a way that is undesirable for the end-user.

Research has shown robots that personalize their behaviors according to users achieve more engagement, long-term acceptance, and performance~\cite{lesort2020continual, paleja2021utility}. In order to realize personalization, the robot could query her user for demonstrations and train a new model from scratch with the homogeneous data. Nonetheless, it is inefficient to require the user to provide a large number of demonstrations, and limited data would inhibit precise learning of the user intentions. An ideal LfD algorithm should not only be adaptive but also leverage knowledge through the lifelong learning of approaches for a task (e.g., from previous users) to efficiently capture a novel strategy.


\begin{figure*}[t!]
\centering
\includegraphics[width=0.85\textwidth]{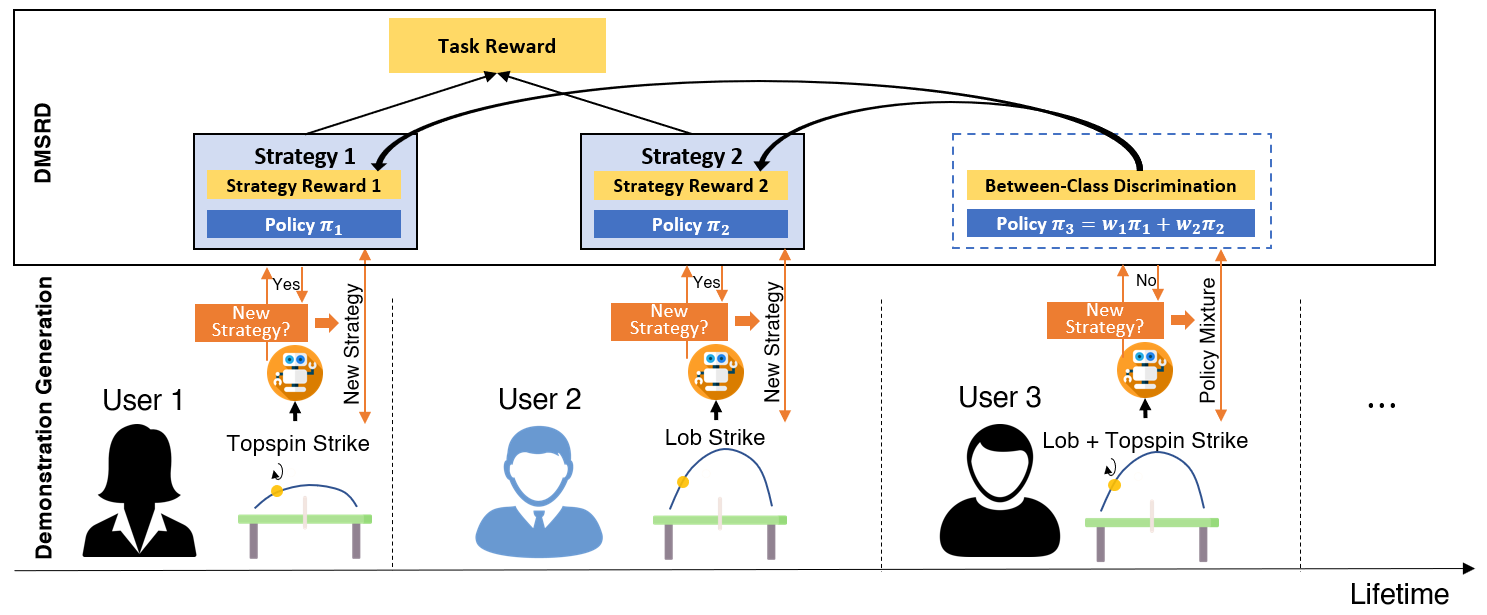} \\
\caption{This figure illustrates the online learning process with our proposed method DMSRD.}
\label{fig:DMSRD_illustration}
\end{figure*}
In this paper, we propose a novel LfD framework, Dynamic Multi-Strategy Reward Distillation (DMSRD), to accomplish \textit{flexible}, \textit{efficient}, and \textit{scalable} lifelong learning from heterogeneous demonstration. In a lifelong learning process where demonstrations arrive in sequence, DMSRD distills common knowledge across demonstrations, autonomously creates new strategies or mixtures of learned strategies, and organizes knowledge from all demonstrations. The federated and personalized architecture enables DMSRD to learn how to succeed at the task while personalizing for each user. DMSRD realizes the scalability needed for lifelong learning by reusing already-synthesized policies as \textit{policy mixtures} to model new strategies when applicable. 

Our contribution in this paper is three-fold: 

\begin{enumerate}
    \item We propose DMSRD, an IRL framework that can dynamically learn a concise set of strategies that model a range of heterogeneous behaviors demonstrated by users.
    \item We show the success of DMSRD on two continuous control tasks and find that DMSRD is able to learn robust task rewards (correlations of $0.953$ and $0.614$ with the ground-truth rewards, respectively) and strategy rewards that encode specific preferences. DMSRD learns policies that achieve higher returns on the task (77\% and 76\% improvement) and better capture the strategic preference than best benchmarks (43\% and 40\% improvement).
    \item We demonstrate that DMSRD scales well with more users in an experiment with 100 demonstrations. DMSRD identifies seven strategies from the 100 demonstrations and utilizes \textit{policy mixtures} to achieve a precise representation of all demonstrations. 
\end{enumerate}

\section{Related Work}
\quad Several approaches for LfD have been proposed, including Imitation Learning (IL)~\cite{Paleja2019InterpretableAP,wangheterogeneous,silva2021lancon} and Inverse Reinforcement Learning (IRL)~\cite{chencorl2020, chen2021towards}. Unlike IL's direct learning of a mapping from states to demonstrated actions, IRL 
infers the latent reward functions the demonstrators optimize for. Adversarial Inverse Reinforcement Learning (AIRL) disentangles reward functions from the environment dynamics, making the recovered reward a robust representation of the demonstrator's intent~\cite{fu2017learning}. Our work leverages AIRL as the backbone IRL method when the available knowledge is not sufficient to explain a new demonstration.


Although traditional IRL approaches usually overlook heterogeneity within demonstrations~\cite{abbeel2004apprenticeship,ramachandran2007bayesian,ziebart2008maximum}, there has been recent work on modeling heterogeneous demonstrations. One intuitive way is to classify demonstrations into homogeneous clusters before applying IRL~\cite{nikolaidis2015efficient}. In these approaches, each reward function only learns from a portion of the demonstrations, making the learning process prone to reward ambiguity~\cite{Chen2020JointGA}. To mitigate the issue, Expectation-Maximization (EM) algorithm has been proposed to make soft assignments of demonstrations to strategies. EM methods treat clustering as the E-step and the IRL problem as the M-step~\cite{vroman,pmlr-v108-ramponi20a}. When the number of strategies is unknown, a Dirichlet Process prior~\cite{NIPS2012_140f6969,10.1007/978-3-030-86486-6_13} or non-parametric methods~\cite{nonparametricbehavior} could be used. While this prior work realizes the \textit{flexibility} to adapt to heterogeneous demonstrations, none consider a lifelong learning from demonstration problem, where \textit{efficiency} and \textit{scalability} are essential. Moreover, previous work mainly operates on a featured state space~\cite{10.1007/978-3-030-86486-6_13,NIPS2012_140f6969} to guarantee tractable EM while our method avails itself to high dimensional state spaces by neural network. Despite the abundance of previous methods, none consider relationships between policies learned. Our method, DMSRD, exploits these relationships to not only model heterogeneous demonstrations, but does so \textit{efficiently} by leveraging already-learned policies and can scale to model large number of demonstrations (\textit{scalability}). 

A previous framework that achieves personalization while considering commonalities between strategies (i.e., the task) is Multi-Strategy Reward Distillation (MSRD). MSRD decomposes person-specific reward functions into a shared task reward and personalized strategy rewards, achieving accurate recovery of the task objective and demonstrators' preferences. Despite MSRD's success, it has a few drawbacks. First, MSRD is only compatible with offline training, thus unable to achieve continual improvements if new demonstrations become available. Second, MSRD assumes access to the strategy labels for each demonstration, which is cost-ineffective, error-prone, and decreases \textit{scalability}. Third, MSRD only consider relationships between demonstrators' reward functions, leaving much to be desired on the policy adaptation efficiency to a new user. DMSRD leverages these relationships to identify a set of base policies that can explain all strategies. Additionally, DMSRD's reward functions learn to identify partial-strategy demonstrations and recover more accurate strategy preferences.

\section{Preliminaries}
In this section, we introduce preliminaries on Markov Decision Process, Inverse Reinforcement Learning, and Multi-Strategy Reward Distillation. 

\paragraph{Markov Decision Process}
A Markov Decision Process (MDP) $M$ is described as a 6-tuple $\langle \mathbb{S},\mathbb{A},R,T,\gamma,\rho_0\rangle$. $\mathbb{S}$ and $\mathbb{A}$ represent the state space and the action space, respectively. $R: \mathbb{S}\rightarrow \mathbb{R}$ denotes the reward function, meaning the environment provides $R(s)$ reward for state $s$. $T: \mathbb{S}\times \mathbb{A}\times \mathbb{S}\rightarrow \mathbb{R}$ represents the transition function, and $T(s^\prime\vert s,a)$ is the probability of transitioning into state $s^\prime$ after taking action $a$ in state $s$. $\gamma\in(0,1)$ is the temporal discount factor. $\rho_0: \mathbb{S}\rightarrow \mathbb{R}$ denotes the initial state probability. A policy, $\pi: \mathbb{S}\times\mathbb{A}\rightarrow \mathbb{R}$, represents the probability to choose an action given the state. An infinite-horizon MDP alternates between policy executions and environment transitions: $s_1\sim \rho_0(\cdot),\ a_t\sim\pi(\cdot\vert s_t),\ s_{t+1}\sim T(\cdot\vert s_t,a_t),\ t=1,2,\cdots$. The standard objective for RL is to find the optimal policy that maximizes the discounted return, $\pi^*=\arg\max_\pi\mathbb{E}_{\tau\sim\pi}\left[\sum_{t=1}^\infty{\gamma^{t-1}R(s_t)}\right]$, and $\tau\sim\pi$ means the trajectory $\tau$ is induced by the policy $\pi$ and implicitly contains the state-action sequence: $\{s_1,a_1,s_2,a_2,\cdots\}$. 

\paragraph{Inverse Reinforcement Learning}
Inverse Reinforcement Learning (IRL) considers an MDP sans reward function ($M\backslash R$) and aims to infer the reward function $R$ based on a set of demonstrations $\mathcal{U}=\{\tau_1,\tau_2,\cdots,\tau_N\}$ where $N$ is the number of demonstrations. Our method is based on Adversarial Inverse Reinforcement Learning (AIRL)~\cite{fu2017learning}, which solves the IRL problem with a generative adversarial setup. The discriminator, $D: \mathbb{S}\times\mathbb{A}\times\mathbb{S}\rightarrow\mathbb{R}$, predicts the probability that the transition $(s_t,a_t,s_{t+1})$ belongs to the demonstration rather than rollouts generated by the generator policy, $\pi_G(a|s)$. The discriminator, $D_\theta$, is trained by the cross-entropy loss: $L_D=-\mathbb{E}_{\tau\sim\mathcal{U}}\left[\log D_\theta(s_t,a_t,s_{t+1})\right]-\mathbb{E}_{\tau\sim\pi_G}\left[{\log(1-D_\theta(s_t,a_t,s_{t+1}))}\right]$. AIRL further proposes a special parameterization of the discriminator, $D(s,a,s^\prime)=\frac{\exp\left(g(s)+\gamma h(s^\prime)-h(s)\right)}{\exp\left(g(s)+\gamma h(s^\prime)-h(s)\right)+\pi(a\vert s)}$. Under certain assumptions, AIRL can recover $g$ as the ground-truth reward, $R$, and $h$ as the ground-truth value, $V$. The generator policy $\pi_G$ is trained via policy gradient to maximize the pseudo-reward given by $g(s)$, and at the global optimal point, $\pi_G$ will recover the demonstration policy. 


\paragraph{Multi-Strategy Reward Distillation} MSRD makes the strong assumption that the strategy label, $c_{\tau_i}\in\{1,\cdots,M\}$, for each demonstration $\tau_i$ is available, where $M$ is the number of strategies. Utilizing the strategy label information, MSRD categorizes demonstrations into strategies, and decomposes the per-strategy reward $R_i$ (for strategy $i$) as a linear combination of task reward, $R_{\text{Task}}$, and strategy-only reward, $R_{\text{S-}i}$: $R_{i}=R_{\text{Task}}+\alpha R_{\text{S-}i}$, where $\alpha$ is a relative weight between task reward and strategy-only reward. MSRD parameterizes the task reward by $\theta_{\text{Task}}$ and strategy-only reward by $\theta_{\text{S}-i}$, takes AIRL as its backbone IRL method, and adds a regularization to encourage knowledge distillation into the task reward $\theta_\text{Task}$ and only keep personalized information in $\theta_{\text{S}-i}$, as shown by the MSRD loss in Equation~\ref{eq:MSRD}. 
\begin{equation}
\label{eq:MSRD}
\small
\begin{split}
    L_D=&-\mathbb{E}_{(\tau,c_\tau)\sim\mathcal{U}}\left[\log D_{\theta_\text{Task}, \theta_{\text{S-}c_\tau}}(s_t,a_t,s_{t+1})\right]\\
    &-\mathbb{E}_{(\tau,c_\tau)\sim\pi_{\phi_i}}\left[{\log(1-D_{\theta_\text{Task},\theta_{\text{S-}c_\tau}}(s_t,a_t,s_{t+1}))}\right]\\
    &+\alpha \mathbb{E}_{(\tau,c_\tau)\sim\pi_{\phi_i}}\left(\left\vert\left\vert R_{\text{S-}c_\tau}(s_t) \right\vert\right\vert\right)
\end{split}
\end{equation}
\normalsize
Together with the reward training, MSRD optimizes a parameterized policy $\pi_{\phi_{i}}$ for each strategy by maximizing the corresponding strategy combined reward $R_{i}$. 

\section{Method}
In this section, we start by introducing the problem setup and notations. We then provide an overview of the DMSRD framework, and describe in detail two key components of DMSRD: \textit{policy mixture} and \textit{between-class discrimination}. 

\subsection{Problem Setup}
We consider a lifelong learning from heterogeneous demonstration process where demonstrations arrive in sequence, as illustrated in Figure~\ref{fig:DMSRD_illustration}. We denote the $i$-th arrived demonstration as $\tau_i$. Unlike previous methods, we do not assume access to the ground-truth strategy label, $c_{\tau_i}$. Similar to MSRD, DMSRD models a shared task reward $R_\text{Task}$, strategy rewards $R_{\text{S-}j}$, and policies corresponding to each strategy $\pi_{j}$. For simplicity, we denote the parameterized rewards and policies as $R_{\theta_\text{Task}}$, $R_{\theta_{\text{S-}j}}$, and $\pi_{\phi_{j}}$. We note that generally $i\neq j$ as DMSRD does not create a separate strategy for every demonstration. We define the number of strategies created by DMSRD till i-th demonstration as $M_i$ ($M_i \leq i$). $\eta_R(\tau)=\sum_{t=1}^\infty{\gamma^{t-1}R(s_t)}$ is trajectory $\tau$'s discounted cumulative return with reward $R$. 

\subsection{DMSRD}
\begin{algorithm}[t]
\small
\SetKwFunction{MSRD}{MSRD}
\SetKwFunction{AIRL}{AIRL}
\SetKwFunction{PolicyMixtureOptimization}{PolicyMixtureOptimization}
\caption{DMSRD}
\label{algo:DMSRD}
\SetKwInOut{Input}{Input}\SetKwInOut{Output}{Output}
\Input{Task reward $R_{\theta_\text{Task}}$, strategy rewards $R_{\theta_{\text{S-}j}}$, policies $\pi_{\phi_j}$ for $j=\{1,\cdots,M_i\}$, threshold $\epsilon$}
\BlankLine
\While{lifetime learning from heterogeneous demonstration}{
Obtain demonstration $\tau_i$\\
$\vec{w}\leftarrow$\PolicyMixtureOptimization{$\tau_i, \{\pi_{\phi_j}\}$}\\
$D_\text{KL}^\text{mix}\leftarrow \mathbb{E}_{\tau\sim\pi_{\vec{w}}}{D_\text{KL}(\tau_i,\tau)}$\\
\eIf{$D_\text{KL}^\text{mix}<\epsilon$}{
MixtureWeights[i]$\leftarrow\vec{w}$, $M_{i+1}\leftarrow M_i$\\
Update $R_{\theta_\text{Task}}, R_{\theta_{\text{S-}j}}, \pi_{\phi_j}$ by Equation~\ref{eq:D-mix-loss} and \MSRD 
}{
$\pi_\text{new},R_{\theta_{\text{S-}(M_i+1)}}\leftarrow$\AIRL{$\tau_i$}\\
$D_\text{KL}^\text{new}\leftarrow\mathbb{E}_{\tau\sim\pi_\text{new}}{D_\text{KL}(\tau_i,\tau)}$\\
\eIf{$D_\text{KL}^\text{mix}<D_\text{KL}^\text{new}$}{
MixtureWeights[i]$\leftarrow\vec{w}$, $M_{i+1}\leftarrow M_i$\\
Update $R_{\theta_\text{Task}}, R_{\theta_{\text{S-}j}}, \pi_{\phi_j}$ by Equation~\ref{eq:D-mix-loss} and \MSRD
}{
MixtureWeights[i]$\leftarrow[\underbrace{0, 0, 0, \cdots}_{M_i\ \text{zeros}}, 1]$\\
$M_{i+1}\leftarrow M_i+1$, $m_{M_{i+1}}\leftarrow i$\\
Update $R_{\theta_\text{Task}}, R_{\theta_{\text{S-}j}}, \pi_{\phi_j}\ \forall j$ by \MSRD 
}
}
}
\end{algorithm}
With our lifelong learning from demonstration problem setup, when a new demonstration $\tau_i$ becomes available, we seek to address the following key questions: 
\begin{enumerate}
    \item How can we efficiently synthesize a policy that solves the task and personalizes to the demonstration?
    \item How can we incorporate the new demonstration into the algorithm's knowledge so the algorithm could provide accurate, efficient, and scalable adaptation for future demonstrations? 
\end{enumerate}


We present our proposed method, DMSRD, to answer these questions with pseudocode in Algorithm~\ref{algo:DMSRD}. When a new demonstration is available, DMSRD tries to explain it by learned strategies via \textit{policy mixture} optimization (line 3). DMSRD then evaluates the identified mixture according to the trajectory recovery objective (line 4). If the trajectory generated by the mixture policy is close enough to the demonstration, as measured by whether the KL-divergence between the trajectories is within a threshold, $\epsilon$, DMSRD adopts the mixture without considering creating a new strategy to achieve \textit{efficiency} and \textit{scalability}, shown on line 5. 


If the threshold is not met, we create a new strategy by training AIRL with the new demonstration, and compare new strategy's performance on the same KL-divergence objective with respect to the \textit{policy mixture}, shown in lines 9-11. If the mixture performs better, we incorporate the mixture weights as in line 12-13. If not, we incorporate the new strategy into DMSRD's knowledge in lines 15-17. This procedure enables DMSRD to automatically decide whether to utilize already-learned policies to model a new demonstrator vs. synthesizing a new strategy reward and associated policy. 



After determining how to explain the demonstration $\tau_i$ (a mixture-based or new, independent strategy), DMSRD optimizes the MSRD loss (Equation~\ref{eq:MSRD}) for all strategies with their corresponding demonstrations (lines 7, 13, and 17). To learn from demonstrations explained by \textit{policy mixture}, we further propose \textit{Between-Class Discrimination}, a novel approach to train each strategy reward by discriminating its own strategy demonstration from other strategies, including mixtures. To prevent biasing the task reward, we postpone MSRD training until we have two strategies.

\subsection{Policy Mixture Optimization}
In order to achieve efficient personalization for a new demonstration $\tau_i$ and scalability in lifetime, we construct \textit{policy mixture} to model the new demonstration by a linear geometric combination of existing policies $\pi_{1},\pi_{2},\cdots,\pi_{M_i}$, as shown in Equation~\ref{eq:policy_mixture}, where $\mathcal{N}$ denotes gaussian distribution, $\vec{w}$ contains linear combination weights ($w_j\geq 0$,  $\sum_{j=1}^{M_i}w_j=1$), $\mu_{\pi_{\phi_j}}$ is the mean output of policy $j$ (parameterized by neural network), and $\sigma$ controls action variation.
\begin{align}
\label{eq:policy_mixture}
    \pi_{\vec{w}}(\cdot\vert s)=\mathcal{N}(\mu_{\vec{w}}(s), \sigma^2),\quad \mu_{\vec{w}}(s)=\sum_{j=1}^{M_i}{w_j\mu_{\pi_{\phi_j}}(s)}
\end{align}
As the ultimate goal of demonstration modeling is to recover the demonstrated behavior, we optimize the linear weights, $\vec{w}$, by the objective of the divergence between the trajectory induced by the mix policy and the demonstration, illustrated in Equation~\ref{eq:policy_mixture_objective}. 
\begin{align}
\label{eq:policy_mixture_objective}
    \minimize_{w_1,w_2,\cdots,w_{M_i}}{\mathbb{E}_{\tau\sim \pi_{\vec{w}}}\left[\text{Div}(\tau_i,\tau)\right]}
\end{align}
Specifically, we choose Kullback-Leibler divergence (KL-divergence)~\cite{10.1214/aoms/1177729694} on the state distributions of trajectories in our implementation, but other divergence metrics also fit. We estimate the state distribution within a trajectory with a non-parametric Kozachenko-Leonenko estimator~\cite{zbMATH04030688}. 

Due to the non-differentiable nature of the trajectory generation process (the environment transition function is only known by a blackbox simulator), we seek a non-gradient-based optimizer to solve the optimization problem. We choose a na\"ive, random optimization method by generating random weight vectors $\vec{w}$, evaluating Equation~\ref{eq:policy_mixture_objective}, and choosing the weight that achieves the minimization. Empirically, we find this optimization method works effectively as the number of strategies generated by DMSRD is low, making the optimization problem relatively easy. We leave the exploration of more sophisticated non-gradient-based optimization methods such as covariance matrix adaptation evolution strategy~\cite{hansen2006cma} for future work. 

\begin{table*}[th]
\caption{This table shows learned policy metrics of AIRL, MSRD, and DMSRD. The results are averages of three trials. }
\begin{center}
\begin{tabular}{lcccccc}
 \hline
 Domains & \multicolumn{3}{c}{Inverted Pendulum} & \multicolumn{3}{c}{Lunar Lander} \\
 \cline{2-7}
 Methods & AIRL Single & MSRD & DMSRD (Ours) & AIRL Single & MSRD & DMSRD (Ours)\\
 \hline
 Environment Return & $-172.7$ & $-166.4$ & $\mathbf{-38.5}$ & $-11546.0$ & $-9895.3$ & $\mathbf{-2405.5}$\\
 Demonstration Log Likelihood & $-11546.0$ & $-40870.5$ & $\mathbf{-6525.0}$ & $-15137.8$ & $-11124.2$ & $\mathbf{-6723.7}$  \\
 Estimated KL Divergence$^*$ & $4.08$ & $7.67$ & $\mathbf{4.01}$ & $71.4$ & $\mathbf{70.9}$ & $79.9$\\
 \hline
\end{tabular}
\end{center}
\quad $^*$Lower is better
\label{table:result}
\end{table*}

\subsection{Between-Class Discrimination}
\textit{Between-Class Discrimination} (BCD) enforces the strategy reward to discriminate its own strategy demonstration from demonstrations that share certain portion of the strategy (the portion could be zero, meaning completely different strategy). Intuitively, if the mixture $\vec{w}$ for demonstration $\tau_i$ has weight $w_j$ on strategy $j$, we could view the demonstration $\tau_i$ has a portion of $w_j$ with strategy $j$. Thus, under the strategy-only reward $R_{\text{S-}i}$, the probability of $\tau_i$ should be $w_j$ proportion of the probability of the ``pure'' demonstration, $\tau_{m_j}$. Such property could be leveraged to enforce a relationship between the reward given to the ``pure'' 
demonstration, $\tau_{m_j}$, and the reward given to another demonstration, $\tau_i\neq\tau_{m_j}$. We present this idea in Theorem \ref{theorem:trajectory_prob}. 
\begin{theorem}
\label{theorem:trajectory_prob}
Under maximum entropy principal,
\small
\begin{align*}
\frac{P(\tau_i;R_{\theta_{\text{S-}j}})}{{P(\tau_{m_j};R_{\theta_{\text{S-}j}})}}=w_j\Leftrightarrow\exp\left(\eta_{R_{\theta_{\text{S-}j}}}(\tau_i)\right)=w_j\exp\left(\eta_{R_{\theta_{\text{S-}j}}}(\tau_{m_j})\right)
\end{align*}
\normalsize
\end{theorem}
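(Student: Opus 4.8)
The plan is to use the maximum-entropy trajectory distribution directly and exploit the fact that the partition function cancels in a ratio of two trajectory probabilities scored by the \emph{same} reward. First I would recall the standard maximum-entropy IRL characterization: for any reward $R$, the induced trajectory distribution is the Boltzmann form $P(\tau;R)=\frac{1}{Z(R)}\exp\!\left(\eta_R(\tau)\right)$, where $Z(R)=\sum_\tau \exp\!\left(\eta_R(\tau)\right)$ (or the corresponding integral over trajectories) is the partition function, which depends only on $R$ and the environment, not on any particular trajectory. This is exactly the distributional assumption underlying AIRL, the backbone method used here.

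Next I would form the left-hand ratio. Since both $\tau_i$ and $\tau_{m_j}$ are evaluated under the single strategy-only reward $R_{\theta_{\text{S-}j}}$, the factor $1/Z(R_{\theta_{\text{S-}j}})$ appears in numerator and denominator alike and cancels, giving
\begin{align*}
\frac{P(\tau_i;R_{\theta_{\text{S-}j}})}{P(\tau_{m_j};R_{\theta_{\text{S-}j}})}
=\frac{\exp\!\left(\eta_{R_{\theta_{\text{S-}j}}}(\tau_i)\right)}{\exp\!\left(\eta_{R_{\theta_{\text{S-}j}}}(\tau_{m_j})\right)}.
\end{align*}
Then the equation $\frac{P(\tau_i;R_{\theta_{\text{S-}j}})}{P(\tau_{m_j};R_{\theta_{\text{S-}j}})}=w_j$ holds iff the right-hand expression above equals $w_j$, and since $\exp\!\left(\eta_{R_{\theta_{\text{S-}j}}}(\tau_{m_j})\right)>0$ is invertible, multiplying through yields $\exp\!\left(\eta_{R_{\theta_{\text{S-}j}}}(\tau_i)\right)=w_j\exp\!\left(\eta_{R_{\theta_{\text{S-}j}}}(\tau_{m_j})\right)$. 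Every step is a reversible equivalence, so the biconditional follows immediately.

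The argument is essentially a one-line cancellation, so there is no deep obstacle; the only point requiring care is the precise form of the max-entropy trajectory distribution when transitions are stochastic, in which case $P(\tau;R)\propto \exp\!\left(\eta_R(\tau)\right)\prod_t T(s_{t+1}\vert s_t,a_t)$. I would handle this by either assuming (near-)deterministic dynamics, consistent with the AIRL disentanglement setting, or by observing that the dynamics product is a function of the trajectory alone (independent of $R$) and hence cancels in the ratio just as $Z$ does, so the claim is unaffected. I would state this assumption explicitly at the start of the proof, and also note for well-definedness that $Z(R_{\theta_{\text{S-}j}})$ is finite under the usual max-entropy IRL conditions (bounded reward together with discounting $\gamma\in(0,1)$).
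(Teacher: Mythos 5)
Your proof is correct and follows the same route as the paper's: the paper simply cites the maximum-entropy trajectory distribution of Ziebart et al., under which $P(\tau;R)\propto\exp(\eta_R(\tau))$ and the partition function (and any dynamics factor) cancels in the ratio, which is exactly the cancellation you carry out explicitly. Your version just fills in the details the paper leaves implicit, including the careful handling of stochastic dynamics.
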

\begin{proof}
The conclusion follows by maximum entropy policy distribution with reward $R_{\theta_{\text{S-}j}}$~\cite{ziebart2008maximum}. 
\end{proof}
Thus, we propose to enforce the relationship of rewards given to pure demonstration and other demonstrations by a MSE loss as shown in Equation~\ref{eq:D-mix-loss}, where $w_{i,j}$ denotes the mixture weight of demonstration $\tau_i$ on strategy $j$. 
{\tiny\begin{align}
\label{eq:D-mix-loss}
L_\text{BCD}(\theta^{\text{S-}1..j};\tau_i)=\sum_{j=1}^{M_i}\left(\exp(\eta_{R_{\theta_{\text{S-}j}}}(\tau_i))-w_{i,j} \exp(\eta_{R_{\theta_{\text{S-}j}}}(\tau_{m_j}))\right)^2
\end{align}
\normalsize}

Intuitively, BCD facilitates robust strategy reward learning by training rewards to recognize demonstrations that share certain portion of the strategy. In addition, \textit{Between-class discrimination} forces strategies to distill any common knowledge amongst them to the task reward.

\section{Results}
\quad We test DMSRD on two simulated continuous control environments in OpenAI Gym~\cite{DBLP:journals/corr/BrockmanCPSSTZ16}: Inverted Pendulum~\cite{todorov2012mujoco} and Lunar Lander~\cite{10.5555/1121584}. We abbreviate Inverted Pendulum as IP and Lunar Lander as LL for result descriptions. The goal in IP is to balance a pendulum by the cart's horizontal movements, with the reward defined as the negative angle between the pendulum and the upright position. The objective in LL is a controlled landing of a space craft. The agent receives a reward of $100$ if it successfully lands, $-100$ if it crashes, $10$ for each leg-ground contact, and $-0.3$ for firing its engine.
To make the environments suitable for our application, we remove termination judgements to allow various behaviors and add a max-time constraint (1000 for IP, 500 for LL). 

\paragraph{Heterogeneous Demonstrations}
As suggested in MSRD, we generate a collection of heterogeneous-strategy demonstrations by jointly optimizing a task and a diversity reward. Diversity is All You Need (DIAYN)~\cite{eysenbach2018diversity} trains a discriminator to distinguish behaviors and the discriminator output entropy is used as our diversity reward. We generate ten unique policies and collect three demonstrations from each policy. Generally, various strategies in IP include sliding or swinging while in LL the spacecraft takes unique flight paths to approach the landing pad\footnote{\label{footnote:video}Videos for strategies available at \url{shorturl.at/pABGK}}. 

\paragraph{Benchmark Experiments}
We compare DMSRD against AIRL and MSRD by running three trials of each method to convergence with the same number of environment samples. On policy metrics, we train a separate AIRL with each demonstration (AIRL Single). On reward metrics, we train a single AIRL on the entire set of demonstrations (AIRL Batch) to improve generalizability on reward learning. 
\subsection{Policy Evaluation}
In this subsection, we show the learned policy comparisons on three metrics: 1) environment returns; 2) demonstration log likelihood evaluated on the learned policy; 3) estimated KL Divergence of the state distributions between the demonstration and rollouts generated by the learned policy\footnote{Similar to how we estimate the state distribution in our method, we apply non-parametric Kozachenko-Leonenko estimator~\cite{zbMATH04030688}.}. The results are summarized in Table \ref{table:result}. All experiment results do not satisfy the assumptions of parametric ANOVA tests, so we perform the non-parametric Friedman test followed by a two-tailed posthoc test. Note that from ten demonstrations, DMSRD created six strategies for each trial in IP and an average of three strategies in LL. 

\paragraph{Learned Policy Task Performance}
DMSRD's learned policies, particularly the \textit{policy mixture} are successful at the task. A Friedman test shows a statistically significant difference amongst the 3 groups with $Q(2)=29.4, p<.01$ for IP and $Q(2)=7.8, p<.05$ for LL. In IP, Nemenyi–Damico–Wolfe (Nemenyi) posthoc test shows a statistically significant difference between AIRL and DMSRD ($p<.01$) and between MSRD and DMSRD ($p<.01$). For LL, a posthoc tests shows a significant difference between AIRL and DMSRD ($p<.05$) and no significant difference between MSRD and DMSRD.

\paragraph{Explanation of Demonstration}
DMSRD is able to explain the demonstrations' strategies more accurately. A Friedman test shows a significant difference when evaluating log likelihood of the demonstration by the learned policy: $Q(2)=16.8, p<.01$ for IP and $Q(2)=22.2, p<.01$ for LL. With IP, the the Nemenyi posthoc test shows the difference between AIRL and DMSRD is not significant and difference between MSRD and DMSRD is significant ($p<.01$). With LL, posthoc test shows a significant difference between AIRL and DMSRD ($p<.01$) and between MSRD and DMSRD ($p<.01$).

\paragraph{Expression of Strategic Preference}
Qualitatively, we find that DMSRD can learn policies and \textit{policy mixtures} that resemble their respective demonstrations\footnotemark[1]. For IP, a Friedman test on the estimated KL divergence shows significant differences between the algorithms $Q(2)=22.2, p<.01$. A posthoc analysis with Nemenyi test shows there is no significant difference between AIRL and DMSRD and a significant difference between MSRD and DMSRD ($p<.01$). In LL, a Friedman test does not show a significant difference $Q(2)=2.4, p=0.30$.

\subsection{Reward Evaluation}
\begin{figure}[t]
\centering
\includegraphics[width=0.9\columnwidth]{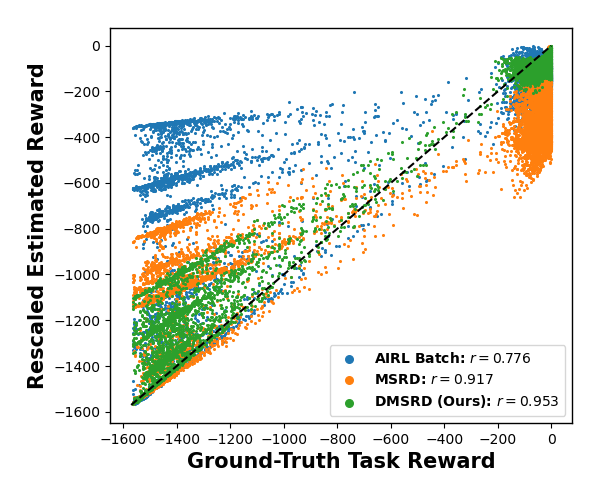}
\caption{This figure shows correlation between the estimated task reward with the ground truth task reward for Inverted Pendulum. Each dot is a trajectory.}
\label{fig:task_reward}
\end{figure}
\begin{figure}[t]
\centering
\includegraphics[width=0.49\columnwidth]{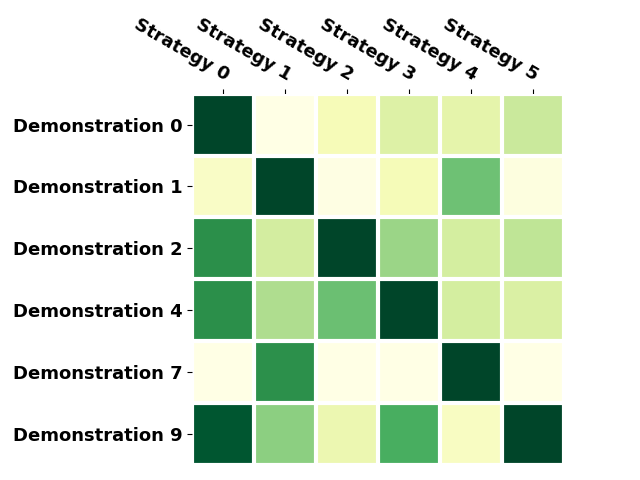}
\includegraphics[width=0.49\columnwidth]{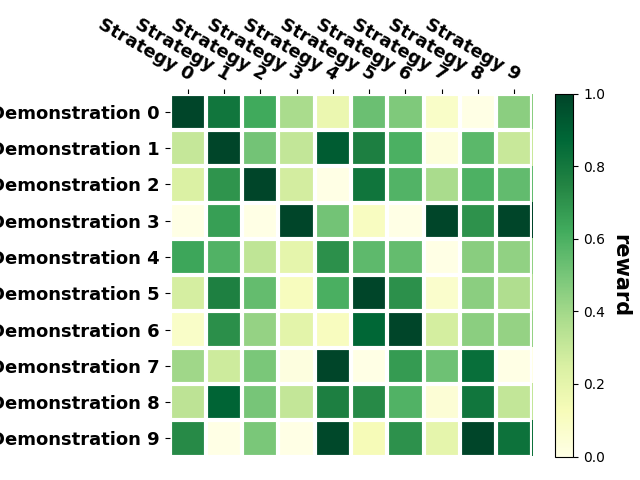}
\caption{This figure depicts evaluation of strategy rewards on demonstrations on Inverted Pendulum for DMSRD (Left) MSRD (Right). Columns are normalized to $[0, 1]$. Darkest squares along the diagonal indicate the strategy rewards identify the respective demonstrations accurately. }
\label{fig:strategy_reward}
\end{figure}
In this subsection, we show the learned task/strategy reward comparisons between DMSRD, MSRD and AIRL. 
\paragraph{Task Reward Correlation}
We construct a test dataset of 10,000 trajectories by the same ``env+diversity reward'' training, obtaining ten unique policies and collecting ten trajectories each every 100 training iterations. Thus, the test dataset has different strategies with varying success. We evaluate the reward function learned by AIRL, MSRD, and DMSRD by comparing the estimated task reward with the ground-truth reward for each trajectory. We compare correlations using a $z$-test after Fischer r-to-z-transformation \cite{fischerrtoz}. For IP, DMSRD has a significantly higher correlation than AIRL ($z=58.56, p<.01$), and than MSRD ($z=20.76, p<.01$). The comparison is also shown Figure~\ref{fig:task_reward}. For LL, DMSRD has a correlation $r=0.614$ which is better than AIRL $r=0.502$ ($z=11.5, p<.01$) and MSRD $r=0.586$ ($z=3.09, p<.01$). DMSRD successfully leverages between-class discrimination to ensure that strategy rewards specialize to the given strategic preferences. As a result, it distills common knowledge to the task reward, allowing it to yield a more accurate reward function.


\paragraph{Strategy Reward Heatmap}
In order to test the success of the strategy rewards in discriminating specific strategic preferences, we evaluate each demonstration on the strategy rewards. For IP, we could observe in Figure~\ref{fig:strategy_reward} (left) each DMSRD strategy reward gives the highest reward to the corresponding demonstration, demonstrating success of DMSRD in identifying the latent preference of the ``pure'' demonstrations. Over three trials, the percentage of strategies DMSRD recognizes is $100\%$ compared to MSRD, which recognizes $57\%$ demonstrations. Likewise, in LL, DMSRD recognizes $100\%$ pure demonstrations compared to MSRD ($83\%$).

\subsection{Lifelong Learning Experiment}
In this subsection, we show DMSRD's success in a larger-scale lifelong learning scenario. We design 95 mixtures with randomized weights from 5 base policies for a total of 100 demonstrations and provide these to DMSRD with the 5 base policies being first. We perform two trials of DMSRD and find it learns a concise set of 5 strategies in the first and 9 strategies in the second that explain the scope of behaviors displayed across the 100 demonstrations. Table \ref{table:scale} shows a similarly successful policy and reward performance as previous experiments. DMSRD's ability to utilize the knowledge of learned strategies facilitates scalable learning of new demonstrations. The strong task reward correlation shows DMSRD can mitigate catastrophic forgetting~\cite{kirkpatrick2017overcoming}, retaining the knowledge it learned while also mitigating capacity saturation since it dynamically expands if presented with new strategies. Thus, we show that DMSRD can scale well to facilitate successful lifelong learning. 


\begin{table}[t]
\begin{center}
\caption{This table shows policy and reward evaluation metrics of DMSRD in the lifelong learning experiment. The results are averaged over two trials. }
\begin{tabular}{ |c|c|  }
 \hline
 Metrics & DMSRD\\
 \hline
 Cumulative Return &  -8.03\\
 Estimated KL Divergence & 12.5\\
 Log Likelihood & -2298.4\\
 Task Reward Correlation & 0.889\\
 Strategies Identified & 100\%\\
 \hline
\end{tabular}
\label{table:scale}
\end{center}
\end{table}

\section{Limitations, Future Work, \& Conclusion}
\quad Limitations of DMSRD include 1) if the initial demonstrations are not diverse enough, the task reward learning may be biased and the ability to effectively model a large number of demonstrations would be impacted; 2) DMSRD inherits the property from AIRL that the learned reward functions are non-stationary. 

We plan to test DMSRD in a real robot task with demonstrations from users with different latent preferences. Being able to learn personalized intentions could inspire future work to optimally combine different strategies to generate novel trajectories. Other further work includes studying how to recover a minimally spanning strategy set that can explain all demonstrations. 

In this paper, we present DMSRD, a novel lifelong learning from heterogeneous demonstration approach. In benchmarks against AIRL and MSRD, we demonstrate DMSRD's \textit{flexibility} to adapt to personal preferences and \textit{efficiency} by distilling common knowledge to the task reward and constructing \textit{policy mixture} when possible. \textit{Between-class discrimination} further helps DMSRD learn a more accurate task and strategy rewards. We also illustrate DMSRD's \textit{scalability} in how it learns a concise set of strategies to solve the problem of explaining a large number of demonstrations. 


\bibliography{aaai22}


\end{document}